\definecolor{light-gray}{gray}{0.97}
\newtheorem{theorem}{Theorem}
\DeclareMathOperator*{\argmax}{arg\,max}
  \providecommand\BibTeX{{%
    \normalfont B\kern-0.5em{\scshape i\kern-0.25em b}\kern-0.8em\TeX}}}
\begin{document}

%%
%% The "title" command has an optional parameter,
%% allowing the author to define a "short title" to be used in page headers.
\title[Interpretable Assessment of Fairness During Model Evaluation]{Interpretable Assessment of Fairness During Model Evaluation}

%%
%% The "author" command and its associated commands are used to define
%% the authors and their affiliations.
%% Of note is the shared affiliation of the first two authors, and the
%% "authornote" and "authornotemark" commands
%% used to denote shared contribution to the research.
\newcommand{\liaff}{
		\affiliation{%
		\institution{LinkedIn Corporation}
		\streetaddress{1101 W Maude Ave} 
		\city{Sunnyvale, CA, USA}
		\postcode{94085}	
}}

\author{Amir Sepehri}
	\liaff 
	\email{asepehri@linkedin.com} 

\author{Cyrus DiCiccio}
\liaff 
\email{cdiciccio@linkedin.com}

%%
%% By default, the full list of authors will be used in the page
%% headers. Often, this list is too long, and will overlap
%% other information printed in the page headers. This command allows
%% the author to define a more concise list
%% of authors' names for this purpose.
\renewcommand{\shortauthors}{Sepehri and DiCiccio}

%%
%% The abstract is a short summary of the work to be presented in the
%% article.
\begin{abstract}
  For companies developing products or algorithms, it is important to understand the potential effects not only globally, but also on sub-populations of users. In particular, it is important to detect if there are certain groups of users that are impacted differently compared to others with regard to business metrics or for whom a model treats unequally along fairness concerns. In this paper, we introduce a novel hierarchical clustering algorithm to detect heterogeneity among users in given sets of sub-populations with respect to any specified notion of group similarity.  We prove statistical guarantees about the output and provide interpretable results. We demonstrate the performance of the algorithm on real data from LinkedIn.
\end{abstract}

%%
%% The code below is generated by the tool at http://dl.acm.org/ccs.cfm.
%% Please copy and paste the code instead of the example below.
%%
\begin{CCSXML}
<ccs2012>
<concept>
<concept_id>10010147.10010257.10010258.10010260.10003697</concept_id>
<concept_desc>Computing methodologies~Cluster analysis</concept_desc>
<concept_significance>500</concept_significance>
</concept>
</ccs2012>
\end{CCSXML}

\ccsdesc[500]{Computing methodologies~Cluster analysis}

%%
%% Keywords. The author(s) should pick words that accurately describe
%% the work being presented. Separate the keywords with commas.
\keywords{hierarchical clustering, heterogeneous treatment effects, sequential hypothesis testing, generalized likelihood ratio test, fairness in machine learning}

%%
%% This command processes the author and affiliation and title
%% information and builds the first part of the formatted document.
\maketitle

\section{Introduction}

For the vast majority of internet applications, changes to products or algorithms are evaluated through some combination of pre-launch exploratory metrics (which may come in the form of model evaluation metrics in the context of algorithms, or focus group research in the case of other product changes) as well as post-launch assessment of the impact seen from the change.    More concretely, the comprehensive evaluation of a new model typically consists of model evaluation metrics computed on the offline data available to develop a model and, pending the success of the model evaluation metrics, online A/B experimentation to understand the business impact of the new model.  In both of these stages, there is scope (and need) to evaluate the effectiveness and fairness of a model for subgroups of users.  This work proposes a clustering algorithm which provides a formal statistical test of the hypothesis of fairness with respect to some pre-specified notion accompanied by groupings of members demonstrating heterogeneity in the case of a rejection.  During the model training phase, there are several natural notions of fairness, such as simply comparing model performance across subgroups of members or comparing fairness metrics such as disparate impact or treatment.  During the A/B experimentation phase, comparing metric impact or lift across groups of members is a natural check for fairness.  In tackling the later problem, our algorithm is applicable to the detection of heterogeneous treatment effects, which is important in the fairness setting to ensure that the impact is beneficial to all users.  The algorithm developed in this work addresses the following three concerns regarding evaluating fairness:

First, for the algorithm to be useful, it needs to scale to internet industry sized datasets.  The computational complexity of our algorithm is $O(n + K^2)$ where n is the sample size and K is the number of subgroups of members under consideration. Most other methods are at least quadratic in the sample size. Others use variants of penalized regression and iterative algorithms where each step of the iteration is $O(K^2 n + K^3)$. We compare the computational complexity of our model to that of CausalTree, which is a state-of-the-art algorithm for estimation of heterogeneous treatment effects, and demonstrate that our algorithm is orders of magnitude faster. 

Second, there is a vast range of categorical features arising from a member base.  To name a few, there are demographic features, including gender, race, and country as well as market segment features such as device type or user behavior features.  Evaluating fairness according to the intersections of these features can be a daunting task.  Of course, pairwise comparisons can be made according to traditional fairness metrics; however, beyond the difficulty of correcting for multiple comparisons, the results will be nearly impossible to interpret.  The categorical features listed above easily define over a thousand sub-populations of members, and a list of pairwise comparisons of fairness would be difficult to digest and hardly actionable.   Our clustering algorithm produces groupings of members according to such categorical features.  The result is readily interpretable, and often actionable.  Once coarser groupings of members are formed, mitigation techniques such as segmented models or weighted training can be applied to ensure that fairness is attained for each of the groups.  These strategies are bolstered by first identifying coarser groupings of members, as the coarser groupings are more data rich than the original group.

Third, during the A/B experimentation phase, tests of statistical significance for metric impact (i.e. reporting p-values demonstrating the strength of evidence that the impact is significant) is the de-facto assessment tool for the success of a new model/etc in most internet applications.  Models will often not see a wide-spread launch without first demonstrating statistically significant, positive impact on the business metrics of interest.  Because rigorous statistical conclusions regarding the efficacy of a product/model launch, conclusions regarding the fairness and performance of a product for subgroups of users need also be held to this standard, and our algorithm provides a scalable mechanism for evaluating fairness along these standards. 

Motivated by this standard of evidence, we introduce a clustering algorithm providing inferential guarantees which easily scales to address the needs of internet applications. Our algorithm identifies groupings (or clusters) of members for whom the model performance is differing. The current work presents a novel method to detect heterogeneity in impact of a new product between predetermined groups of members.  This can be used to detect groupings of market segments that are affected similarly within grouping but differently between groupings.

The algorithm takes a notion of similarity (chosen to be pertinent to the problem at hand), as well as a partition of members (which can be chosen though demographic information such as gender, race, and country), and returns coarser, intersectional groupings of these members which the model performance is statistically significantly dissimilar.  Notions of similarity pertaining to fairness metrics as well as conventional A/B experimentation metrics are discussed in detail. We describe how this algorithm can be used during both the model development phase as well as during A/B experimentation to assess the fairness of a new model.  

More specifically, we provide an ``agglomerative clustering'' algorithm which begins with groupings (or clusters of users) and iteratively merges clusters according to their similarity in a predetermined sense. The idea is to not only determine whether all groupings of members are impacted similarly, but to identify further groupings of the members which demonstrate similar performance.  

Agglomerative hierarchical clustering works by starting with clusters consisting of the individual data points and, at each step, merging the two most similar clusters. It is customary to use a measure of dissimilarity between individual data points, and expand it to a notion of dissimilarity between two groups of data points using a linkage function. In this work, we introduce a novel notion of dissimilarity between groups of data points that incorporates the structure of the problem in hand and the statistical nature of the problem. The result is a hierarchical clustering algorithm that provides statistical guarantees and inference about the output.

The remainder of the paper is organized as follows.  Section \ref{sec:lit} provides a literature review situating the present work in the landscape of statistical testing, clustering, and detection of heterogeneity of treatment effects.  In Section \ref{sec:methodology}, we propose a novel clustering algorithm as well as a definition of similarity between clusters which is required to make rigorous statistical claims.  Formal details for evaluating the performance of a new model according to the pre- and post-launch criteria introduced in Section \ref{sec:criteria} using the methodology developed in Section \ref{sec:methodology} are provided in Section \ref{sec:similarity}.  Finally, the favorable performance of our proposed methodology is demonstrated through simulated data in Section \ref{sec:simulation} and through ''real-world'' data in Section \ref{sec:experiment}.

\section{Literature Survey}\label{sec:lit}

The clustering algorithm introduced in the present work provides a mechanism for evaluating fairness of a model during the model development phase as well as during A/B experimentation across a pre-specified set of categorical user attributes.  In doing so, this work also introduces a novel test for the detection of heterogeneous treatment effects.  We now position this work in existing literature.  

Traditionally, in the statistics literature, tests of homogeneity of groups have been performed through chi-squared or ANOVA tests.  Such tests typically can identify that the groups are not homogeneous, but are unable to characterize which groups are similar or different, without making use of ad-hoc metrics such as the chi-squared components.  These metrics are largely uninterpretable, particularly when there are a large number of groups under consideration and therefore are not well suited to the problem of assessing fairness.  There are several more modern families of methods, such as tree and clustering algorithms which can be useful for providing interpretable groupings of users, although many of these methods fail to provide meaningful statistical guarantees.  

A number of clustering algorithms such as K-means/mediods \cite{macqueen1967some,kaufman2009finding}, Bayes Classifiers, and Gaussian Mixture Models, which can provide disparate groups of users.  These can be somewhat limited in that they do not begin from groupings of users and they require a predetermined idea of the number of clusters.  Later developments such as incorporating information criteria lessen the need to pre-specify the number of clusters, but are still unsatisfying for our problem as we do not know a priori the number of dissimilar groupings of members.  Algorithms incorporating some initial notion of clusters include agglomerative and divisive hierarchical clustering \cite{sabine2001cluster}.  These methods typically require stopping rules which translate in non-obvious ways to error rates among the identified clusters.

Our work seeks to bridge these approaches by adding statistical guarantees to clustering algorithms which can readily (and scalably) be applied to assessing fairness in A/B testing settings.  The problem of assessing fairness during A/B experimentation has been largely unaddressed in the literature, though recently, it has been tackled by \cite{saintjacques2020}.  This work compares an Atkinson index metric (which measures inequality of metrics between users) between two models.  There are two advantages of this work over clustering approaches: the fairness is assessed at a user level which can identify unfairness that might not be apparent from comparing groups and this methodology does not require the user to specify attributes, and consequently can identify unfairness driven by attributes that a user may not think to specify.  That being said, the main drawback of this approach is that the results lack interpretability.  Specifically, the Atkinson index relies on a user selected parameter and depending on the choice of parameter, the direction of the unfairness may change.  As a result, the user cannot concretely infer which model is more fair.  Furthermore, the Atkinson index is a concentration based metric which provides evidence that there is inequality, but which does not provide insight into which groups of members may be underserved.  This work emphasizes the detection of underserved users, which is essential for informing mitigation strategies.   

Outside of fairness and A/B experimentation contexts, there have been several other works providing clustering algorithms with statistical guarantees, including \cite{chung20,kimes17,lenenstien04,Liu08}  .  This line of works are largely aimed at addressing genetic data, and often rely on assumptions (such as normality of the observed data) which are unrealistic outside of biological data and that make them unsuitable for applications in fairness.

In developing our algorithm, we also introduce a novel algorithm for detecting heterogeneous treatment effects based on this new framework. Detection of heterogeneous treatment effects is important across different applications, including health care, A/B testing in the internet industry \cite{alexDeng,deng2016concise,kunzel2019metalearners,wager2018estimation}, social sciences \cite{xie2012estimating,green2012modeling,hastings2006preferences}, as well as in experiments used in logistics and resource allocation \cite{imai2011estimation,zhou2017estimating}.  A line of work focusing on the detection of heterogeneous treatment effects along a predefined set of member dimensions such as location, language, etc. includes using penalized fixed effects for each subgroup \cite{deng2016concise,feller2009beyond,dixon1991bayesian}.  Many of these methods only declare whether there is heterogeneity and leave it to the user to find interpretable patterns. Others provide clusters, and hence offer more interpretable insights, though they often choose the number of clusters in ad-hoc manners.  Regardless, none of these methods provide statistical interpretation and inference for the output.  Another line of work in this direction, which is more readily applicable to fairness problems, focuses on detecting the subgroups of users that are impacted different from each other using a combination of machine learning and causal inference methods \cite{athey16,wager2018estimation,kunzel2019metalearners,grimmer2017estimating}.  Many of these works, including the Causal Tree algorithm, are well suited for continuous valued features, but may not perform well in the context of categorical features, which are of primary concern for fairness applications.  We demonstrate these shortcomings in our simulation section and provide evidence that our method is more effective in these contexts.

\section{Criteria for Evaluating a Model}\label{sec:criteria}

In this section, we describe how clustering can be used to evaluate the fairness and efficacy of a model 

\subsection{Pre-Launch: Evaluating Fairness of Machine Learning Models}
From access to healthcare to personal finance decisions, people are becoming increasingly reliant on the decisions made by machine learning models.  With the pervasiveness of these models, it is of great importance to ensure that the models are fairly treating all users. As a first step towards mitigating potential bias, we must understand the users which a model may be under serving.  Clustering based on fairness metrics can help to identify groupings of users for whom the model is giving undesirable performance, which can in turn be useful in mitigating such bias.  

As a motivating example, consider a model that is biased based on user age.  Further, suppose that the model tends to perform well on younger members, less well on middle aged members, and poorly on older members, but that the member ages are discretized into bins (perhaps by 5 year groupings so that users can be put into ranges, 20-24, 25-29, etc.).  In this example, an agglomorative clustering algorithm seeks to recover the groupings of members for which the model performs differently.  This can be helpful for interpretability to identify appropriate ranges from the finer partitioning of members. 

With a large number of protected attributes, and non-obvious interactions between said attributes, clustering can be very helpful for identifying coarser and more readily interpretable grouping of users. 

\subsection{Post-Launch: A/B Testing}
The efficacy of a product launch can differ across groupings of members.  To fully understand the performance, we monitor the results of our experiments along different member segments and product dimensions.  As a concrete example consider the geographical location of a member. When building a new product, we need to ensure that product ramp decisions are made not only based on the global, aggregated impact, but also take into consideration the impact in priority markets. An experiment can impact members in different countries differently, and a significant impact in an important country may not show up at the global aggregate level. 

Consider launching a new version of a product based on the results of an A/B test in \(K\) countries. Let \((\% \Delta_i, SD_i)\) be the estimated lift and its standard error for country \(i\). We want to determine if there is heterogeneity in \(\{\% \Delta_i\}\). This problem can be formally represented in different ways. One approach is to interpret heterogeneity in terms of clustering of the lifts: Are there groups of countries that are performing different from each other. This is the formulation used in this work. We introduce a novel hierarchical clustering algorithm to solve this problem.

\section{Methodology}\label{sec:methodology}

\subsection{Testing similarity of groups}\label{subsec:method}

Suppose that users are segmented into $K$ groups $C_1,...,C_K$ (e.g. by countries or protected attributes).  We propose a procedure which iterative merges the two "most similar" clusters until there is strong evidence that the remaining clusters are dissimilar.  At this point, the procedure rejects the null hypothesis that the clusters are all similar, and reports the aggregated clusters which appear to be dissimilar.  Formally, we are interested in testing the null hypothesis 
\begin{equation}\label{eqn:GlobalNull}
H_0: C_i \text{ is similar to } C_j \text{ for all } i \neq j, 1 \leq i,j \leq K.     
\end{equation}

Our algorithm requires a formal definition of ``similarity.''  Some examples are defining clusters to be similar if they have equal lift or are equivalent according to a fairness metric.  We also require that the cluster similarity is merge invariant under the specified metric in the following sense.

\begin{definition}
A notion of cluster similarity is said to be {\it merge-invariant} if for any similar clusters $C_1$, $C_2$, $C_3$, we have that $C_1$ is similar to the cluster formed by merging $C_2$ and $C_3$. 
\end{definition}

A discussion of when the merge-invariance property holds (and fails to) is given in the Appendix.  We also require that we have a method to obtain a p-value for testing the pairwise hypotheses 
\[
H_{i,j}: C_i \text{ is similar to } C_j
\]
for each $i$ and $j$.  Notions of similarity as well as the corresponding tests are given in subsequent sections.  

Our iterative algorithm proceeds as follows.  Begin with clusters $C_1,...,C_K$.  Test the null hypothesis 
\begin{align*}
H_0^{(0)}:& \text{ There exists a pair of clusters } C_i \text{ and } C_j \text{ such that } \\
& C_i \text{ is similar to } C_j
\end{align*}
that is, we test the null hypothesis that there exist at least one pair of similar clusters at level $\alpha/K$.  To perform this test, we find p-values $\hat p_{i,j}$ for testing the hypotheses
\[
H_{i,j}^{(0)}: C_i \text{ is similar to } C_j
\]
and reject the null hypotheses if all of the $\hat p_{i,j}$ are below $\alpha/K$ (which provides strong evidence that all of the clusters are dissimilar).  If the test rejects, there is evidence that all the nodes are dissimilar, so we report all clusters.  If the test fails to reject, ``merge'' the two most similar clusters (i.e. clusters $C_i$ and $C_j$ with largest p-value $\hat p_{i,j}$), resulting in new clusters $C_1^{(1)},...,C_{K-1}^{(1)}$.  Suppose that at iteration $b$, no rejections have been made.  Then, test 
\begin{align*}
H_0^{(b)}: & \text{ There exists a pair of clusters } C_i^{(b)} \text{ and } C^{(b)}_j\\  & \text{ such that } 
 C_i^{(b)} \text{ is similar to } C_j^{(b)}
\end{align*}
at level $\alpha/K$ by finding p-values $\hat p^{(b)}_{i,j}$ for the hypotheses  
\[
H_{i,j}^{(b)}: C^{(b)}_i \text{ is similar to } C^{(b)}_j
\]
and rejecting if $\hat p_{i,j}$ are below $\alpha/K$.  If this null hypothesis is rejected, stop the procedure and report clusters $C_1^{(b)},...,C_{K-b}^{(b)}$ as disparate.  Otherwise, merge the two most similar clusters (i.e. those clusters with larges p-value).  This procedure is formalized in Algorithm \ref{algo:clustering}.  Methods for efficiently performing the pairwise tests are given in Section \ref{sec:LikelihoodRatioComputation}.

\begin{algorithm}[!h]
	\caption{Testing Similarity via Agglomerative Clustering}\label{algo:clustering}
	\begin{algorithmic}[1]
		\State \text{Input: } Clusters $C_1,...,C_K$ and level of significance $\alpha$
		\State Output : A test decision for the null hypothesis, $H_0$, that all clusters are similar and and in the case of rejection, dissimilar clusters
		\State Set $C^{(0)}_i = C_i$ for $1\leq i \leq K$ and $b = 0$
		\While{$H_0$ has not been rejected}
		\State Compute p-values $\hat p^{b}_{i,j}$ for testing that cluster $C^{(0)}_i$ is similar to $C^{(0)}_j$ for each $i$ and $j$.
		\If{$\hat p^{b}_{i,j} < \alpha/K$ for all $i$ and $j$}
		\State Reject $H_0$
		\Else 
		\State Merge the two most similar clusters (i.e. those with largest $\hat p^{b}_{i,j}$) resulting in new clusters 
		\State $C^{(b+1)}_1,...,C^{(b+1)}_{K-b-1}$
		\State Set $b = b+1$
		\EndIf
		\EndWhile
		
		\Return test decision and disparate clusters $C^{(b)}_1,...,C^{(b)}_{K-b}$ in the case of rejection
	\end{algorithmic}
\end{algorithm}

We now state a theorem establishing the validity of the testing procedure given in Algorithm \ref{algo:clustering}.

\begin{theorem}\label{thm:validity}
Assuming that the notion of similarity is merge invariant and the p-values $\hat p^b_{i,j}$ are independently of the previous decisions to merge clusters, Algorithm \ref{algo:clustering} controls the probability of falsely rejecting the null hypothesis specified in Equation (\ref{eqn:GlobalNull}) at level $\alpha$.  Furthermore, the probability that the test rejects the null hypothesis and reports clusters $C_1^{(b)},...,C_{K-b}^{(b)}$ as disparate, when in fact at least one pair of these clusters is similar is bounded by $\alpha$.
%Furthermore, when the test rejects the null hypothesis and reports clusters $C_1^{(b)},...,C_{K-b}^{(b)}$ as disparate, then
%\begin{equation}
%    P \left( C_i^{(b)} \text{ is similar to } C_j^{(b)} \text{ for any } %i \text{ and } j \right) < \alpha
%\end{equation}
\end{theorem}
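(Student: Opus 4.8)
The plan is to reduce both assertions to a union bound over the at most $K-1$ iterations of Algorithm \ref{algo:clustering}, at each iteration bounding the probability of an erroneous rejection by the level $\alpha/K$ charged to that step. The only probabilistic input is that each $\hat p^b_{i,j}$ is a valid p-value for the hypothesis $H^b_{i,j}$ that $C^{(b)}_i$ and $C^{(b)}_j$ are similar, i.e. $\Pr(\hat p^b_{i,j} \le t) \le t$ whenever that pair is genuinely similar. I would first observe that the second (stronger) claim essentially contains the first: under the global null of Equation (\ref{eqn:GlobalNull}) every pair of the original clusters is similar, so it suffices to show that merge-invariance propagates similarity through the merges and then invoke the argument for the second claim.

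For the first claim, I would argue by induction on $b$ that, under $H_0$, all clusters present at iteration $b$ are pairwise similar. The base case $b=0$ is the statement of $H_0$. For the inductive step, a merge combines two clusters $C^{(b)}_i$ and $C^{(b)}_j$; for any third cluster $C^{(b)}_\ell$, all three are similar by hypothesis, so merge-invariance gives that $C^{(b)}_\ell$ is similar to the merged cluster, while the remaining pairs are unchanged. Consequently, at every iteration there is at least one similar pair, and a false rejection at iteration $b$ requires every $\hat p^b_{i,j}$, and in particular that of one fixed similar pair, to fall below $\alpha/K$. Validity of that single p-value bounds the per-iteration rejection probability by $\alpha/K$, and since the number of iterations at which a test is actually performed is at most $K-1$ (each non-rejecting step removes one cluster), a union bound yields total false-rejection probability at most $(K-1)\alpha/K \le \alpha$.

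For the second claim I would not assume the global null. Let $R$ denote the random iteration at which the algorithm rejects. The event to be bounded is that $R = b$ for some $b$ and that among the reported clusters $C^{(b)}_1,\dots,C^{(b)}_{K-b}$ at least one pair, say $(i^\ast, j^\ast)$, is in fact similar. Since the configuration of clusters at iteration $b$ is a deterministic function of the merge decisions made in iterations $0,\dots,b-1$, I would condition on that configuration. Whenever it contains a similar pair, rejection at $b$ forces $\hat p^b_{i^\ast,j^\ast} < \alpha/K$; here the independence assumption is exactly what guarantees that $\hat p^b_{i^\ast,j^\ast}$ remains a valid p-value after conditioning on the path that produced these clusters, so its conditional probability of dropping below $\alpha/K$ is at most $\alpha/K$. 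Summing over configurations and then over the at most $K-1$ possible values of $b$ gives the bound $\alpha$.

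The main obstacle, and the reason the independence hypothesis is needed, is the adaptivity of the procedure: the hypotheses $H^b_{i,j}$ tested at iteration $b$ are themselves selected by the data through the earlier merges, so one cannot naively treat the $\hat p^b_{i,j}$ as valid p-values for fixed nulls. The care required is to condition on the entire merge history up to iteration $b$ and to verify that, for a pair that is truly similar, this conditioning does not inflate the tail probability beyond $\alpha/K$ — which is precisely the content of the assumption that the $\hat p^b_{i,j}$ are independent of the previous merge decisions. Once this conditional validity is established, the counting of iterations and the union bound are routine.
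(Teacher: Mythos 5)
Your proposal is correct and takes essentially the same route as the paper's proof: merge invariance guarantees that a genuinely similar pair is present at every iteration under the relevant null, the validity of that single pair's p-value bounds each iteration's rejection probability by $\alpha/K$, and a Bonferroni-style union bound over the at most $K$ sequentially tested hypotheses gives the overall level $\alpha$. Your explicit induction showing that all pairs remain similar after each merge, and your conditioning on the merge history to justify per-iteration p-value validity, simply spell out details the paper compresses into its appeal to merge invariance and the independence assumption.
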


The independence of the p-values can be met using data-splitting as follows.  If the available data is split into $K$ disjoint sets, and each split is used to test one of the sequential hypotheses, then the p-values meet this condition.  Alternatively, the data can be split into two portions.  If this method is applied on the first portion of the data using an arbitrary threshold, and then the remainder of the data is used to test the pairwise dissimilarity of the resultant clusters, the statistical significance is maintained.  Also, in many experimental settings, the data is observed sequentially over time, e.g. metrics may be aggregated daily.  As new data is observed, it can then be used to test one of the sequential hypotheses.  Alternatively, because there are ultimately of order $K^2$ pairwise hypotheses considered, we suggest replacing $\alpha/K$ by $\alpha/K^2$ hypotheses in situations where this independence condition is not met.  A Bonferroni style corrections suggests that this is an appropriate threshold, and the validity of tests using this threshold is demonstrated in the simulations section.  

\section{Notions of Similarity and Formal Testing}\label{sec:similarity}
Algorithm \ref{algo:clustering} requires a notion of similarity as well as a method of performing a hypothesis test that two clusters of users are similar with respect to this notion. We derive a novel testing framework for assessing equality of lift as well as suggest some well-known tests in the context of fairness.  

%\subsection{Analyzing A/B Results According to Equality of Lift Using the %Likelihood-ratio Link Function}\label{sec:notion-loglike}
\subsection{Analyzing A/B Results}\label{sec:abtest}
Consider the problem of evaluating A/B experiment results for $K$ subpopulations of members, $C_1, \ldots, C_K$. As described in section \ref{subsec:method}, we are interestied in testing the testing the following null hypothesis
\begin{equation*}
H_0: C_i \text{ is similar to } C_j \text{ for all } i \neq j, 1 \leq i,j \leq K.     
\end{equation*}
for appropriate notions of similarity.  
Furthermore, to proceed with our algorithm, we need to find appropriate p-values for testing
\[
H_{i,j}: C_i \text{ is similar to } C_j.
\]
We give the details of appropriate choice of similarity and testing procedure for detecting heterogeneous treatment effects, as well as heterogeneity in lift.  

\subsubsection{Detecting Heterogeneous Treatment Effects}\label{sec:hte}

For many experiments, practitioners want to understand the treatment 

Typically, an A/B experiment analyses the average treatment effect (ATE) which implicitly assumes a similar impact across members.  Identifying heterogeneous treatment effects provides an understanding of the treatment impact on relevant subpopulations of members.  

For each $C_i$, define the treatment effect as $\tau_i$.  This treatment effect can be estimated by $\hat \tau_i$ defined to be the difference of the per-member metric estimated on the members exposed to the treatment model, and the per member metric estimated on the members exposed to the control model.  In most cases, $\hat \tau_i$ is approximately normally distributed with mean $\tau_i$ and variance that is easily estimated.  By defining $H_{i,j}$ as
\[
H_{i,j}: \tau_i = \tau_j,
\]
applying Algorithm \ref{algo:clustering} yields groupings of members displaying heterogeneous treatment effects.  Note that the hypothesis $H_{i,j}$ are readily tested using the usual two-sample $t$-test.  In addition to identifying heterogeneity in treatment effects, it is also important to understand the heterogeneity in metric lift, which is addressed in the next section.  

\subsubsection{Detecting Equality of Lift Using the Likelihood-Ratio Link Function}\label{sec:notion-loglike}

Assume that we observe the percentage lift in a metric and corresponding standard deviation \( (\% \Delta_i, SD_i)\) for each $C_i$. We model $\% \Delta_i$ as a normally distributed random variable around a true treatment effect $\mu_i$. That is, $\%\Delta_i \sim N(\mu_i, SD_i^2)$. This is a reasonable model as $\% \Delta_i$ is a function of sample averages; those averages are approximately normally distributed because of central limit theorem and any differentiable function of them as also approximately normally distributed per the standard delta method argument. Then, the $H_{i,j}$ can be framed as
\[
H_{i,j}: \mu_i = \mu_j.
\]
This hypothesis can be tested using a number of standard tests. One such test is the generalized likelihood ratio test, which coincides with the t-test in this specific case. The generalized likelihood ratio test compares the maximum likelihood of the data under the null hypothesis to the same quantity under the unrestricted model. The unrestricted parameter space here is $(\mu_i, \mu_j) \in \mathbb{R}^2$. Let $L(\mu_i, \mu_j)$ denote the likelihood of the data under the model given by $(\mu_i, \mu_j)$. Then, the likelihood ratio statistic is
\begin{align*}
    LR = - 2 \log \frac{\max_{\mu_i = \mu_j} L(\mu_i, \mu_j)}{\max_{\mu_i , \mu_j} L(\mu_i, \mu_j)}.
\end{align*}
It simplifies to
\begin{align}\label{eqn:basicLRT}
    LR = \frac{(\%\Delta_i - \%\Delta_j)^2}{SD_i^2 + SD_j^2}.
\end{align}
We know that under the null $LR$ is a chi-squared random variable with one degree of freedom. Therefore, $p_{i,j} = 1 - \chi_1^2(LR)$ is a valid p-value for testing $H_{i,j}$.

A similar approach works at each iteration of our iterative algorithm. At iteration $b$, we need to test
\begin{align*}
H_0^{(b)}: & \text{ There exists a pair of clusters } C_i^{(b)} \text{ and } C^{(b)}_j\\  & \text{ such that } 
 C_i^{(b)} \text{ is similar to } C_j^{(b)},
\end{align*}
which requires testing 
\[
H_{i,j}^{(b)}: C^{(b)}_i \text{ is similar to } C^{(b)}_j,
\]
where $C^{(b)}_i$ and $C^{(b)}_j$ are two cluster of countries formed in previous iterations. This is accomplished as follows. Assume \(C^{(b)}_i = \{C_{i_1},...,C_{i_{m_i}}\}\) and \(C^{(b)}_j = \{C_{j_1},...,C_{j_{m_j}}\}\) . The data is modeled as 
\begin{align*}
\%\Delta_{i_s} \sim N(\mu^{(b)}_i, SD_{i_s}^2),\\
\%\Delta_{j_s} \sim N(\mu^{(b)}_j, SD_{j_s}^2),
\end{align*}
where $\mu^{(b)}_i$ and $\mu^{(b)}_j$ are the common true treatment effect for countries in $C^{(b)}_i$ and $C^{(b)}_j$, respectively.

Under this model, we have
\[
H_{i,j}^{(b)}: \mu^{(b)}_i = \mu^{(b)}_j.
\]
This hypothesis can be tested using likelihood ratio test. Again, let  $L^{(b)}(\mu^{(b)}_i ,  \mu^{(b)}_j)$ be the likelihood function of the data under $(\mu^{(b)}_i ,  \mu^{(b)}_j)$. The likelihood ratio statistic is
\begin{align*}
    LR^{(b)}_{i,j} = - 2 \log \frac{\max_{\mu^{(b)}_i =  \mu^{(b)}_j} L^{(b)}(\mu^{(b)}_i ,  \mu^{(b)}_j)}{\max_{\mu^{(b)}_i ,  \mu^{(b)}_j} L^{(b)}(\mu^{(b)}_i ,  \mu^{(b)}_j)}.
\end{align*}
The optimal values of the parameters under the unrestricted model are given by
\begin{align*}
\hat{\mu}_{C^{(b)}_i} &= \left( \sum_{n\in C^{(b)}_i} SD_n^{-2} \right)^{-1} \sum_{n\in C^{(b)}_i} \frac{\%\Delta_n}{SD_n^2},\\
\hat{\mu}_{C^{(b)}_j} &= \left( \sum_{m\in C^{(b)}_j} SD_m^{-2} \right)^{-1} \sum_{m\in C^{(b)}_j} \frac{\%\Delta_m}{SD_m^2}.
\end{align*}
Under the restricted model, we have
\begin{align*}
\hat{\mu}_{C^{(b)}_i} = \hat{\mu}_{C^{(b)}_j}  = \left( \sum_{n\in C^{(b)}_i \cup C^{(b)}_j} SD_n^{-2} \right)^{-1} \sum_{n\in C^{(b)}_i \cup C^{(b)}_j} \frac{\%\Delta_n}{SD_n^2}.
\end{align*}
We substitute these values and simplify to obtain
\begin{align*} LR^{(b)}_{i,j} = \left(S_{C^{(b)}_i} + S_{C^{(b)}_j}\right)^{-1} \left( \sqrt{\frac{S_{C^{(b)}_j}}{S_{C^{(b)}_i}}} \Tilde{\Delta}_{C^{(b)}_i} - \sqrt{\frac{S_{C^{(b)}_i}}{S_{C^{(b)}_j}}} \Tilde{\Delta}_{C^{(b)}_j}\right)^2,
\end{align*}
where we have, for a set $C$,
\begin{align}
    S_C &= \sum_{n\in C} SD_n^{-2},\label{eqn:summary-stats-precision} \\ 
    \Tilde{\Delta}_C &= \sum_{n\in C} \frac{\%\Delta_n}{SD_n^2}. \label{eqn:summary-stats-weighted-mean}
\end{align}
The following lemma is a straightforward consequence of the model used in this section along with central limit theorem.
\begin{lemma}
Under the null hypothesis $H_{i,j}^{(b)}$, $LR^{(b)}_{i,j}$ is a chi-square random variable on one degree of freedom. Therefore, $p_{i,j}^{(b)} = 1 - \chi_1^2(LR^{(b)}_{i,j})$ is a valid p-value for testing $H_{i,j}^{(b)}$.
\end{lemma}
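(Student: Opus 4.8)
\emph{Proof proposal.} The plan is to reduce $LR^{(b)}_{i,j}$ to the square of a single standardized normal variate, exploiting the fact that under the stated model the inverse-variance weighted means are themselves exactly normal with explicitly computable moments. First I would observe that the unrestricted MLEs can be written compactly as $\hat{\mu}_{C^{(b)}_i} = S_{C^{(b)}_i}^{-1}\, \Tilde{\Delta}_{C^{(b)}_i}$ and $\hat{\mu}_{C^{(b)}_j} = S_{C^{(b)}_j}^{-1}\, \Tilde{\Delta}_{C^{(b)}_j}$, using the summary statistics defined in (\ref{eqn:summary-stats-precision}) and (\ref{eqn:summary-stats-weighted-mean}). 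Since each $\%\Delta_n$ is, under the model, an independent $N(\mu, SD_n^2)$ variate (with $\mu$ the common mean of the cluster to which $n$ belongs), each of $\hat{\mu}_{C^{(b)}_i}$ and $\hat{\mu}_{C^{(b)}_j}$ is a linear combination of independent normals and hence normal; a direct moment computation gives mean $\mu^{(b)}_i$ (resp.\ $\mu^{(b)}_j$) and variance $S_{C^{(b)}_i}^{-1}$ (resp.\ $S_{C^{(b)}_j}^{-1}$).

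Next I would rewrite the quadratic form appearing in $LR^{(b)}_{i,j}$ in terms of these weighted means. Substituting $\Tilde{\Delta}_{C^{(b)}_i} = S_{C^{(b)}_i}\, \hat{\mu}_{C^{(b)}_i}$ (and likewise for $j$) into the displayed expression for $LR^{(b)}_{i,j}$ collapses the square-root prefactors, since $\sqrt{S_{C^{(b)}_j}/S_{C^{(b)}_i}}\, S_{C^{(b)}_i} = \sqrt{S_{C^{(b)}_i} S_{C^{(b)}_j}}$, so the term in parentheses becomes $\sqrt{S_{C^{(b)}_i} S_{C^{(b)}_j}}\,(\hat{\mu}_{C^{(b)}_i} - \hat{\mu}_{C^{(b)}_j})$. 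Carrying the overall factor $(S_{C^{(b)}_i} + S_{C^{(b)}_j})^{-1}$ through yields
\[
LR^{(b)}_{i,j} = \frac{S_{C^{(b)}_i}\, S_{C^{(b)}_j}}{S_{C^{(b)}_i} + S_{C^{(b)}_j}} \left( \hat{\mu}_{C^{(b)}_i} - \hat{\mu}_{C^{(b)}_j} \right)^2 .
\]

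The crux is then the identity that this scalar prefactor is exactly the reciprocal of the variance of the difference. Because the two clusters use disjoint data, $\hat{\mu}_{C^{(b)}_i}$ and $\hat{\mu}_{C^{(b)}_j}$ are independent, so $\mathrm{Var}(\hat{\mu}_{C^{(b)}_i} - \hat{\mu}_{C^{(b)}_j}) = S_{C^{(b)}_i}^{-1} + S_{C^{(b)}_j}^{-1} = (S_{C^{(b)}_i} + S_{C^{(b)}_j})/(S_{C^{(b)}_i} S_{C^{(b)}_j})$, whose inverse is precisely the prefactor above. Under $H^{(b)}_{i,j}: \mu^{(b)}_i = \mu^{(b)}_j$ the difference $\hat{\mu}_{C^{(b)}_i} - \hat{\mu}_{C^{(b)}_j}$ is mean-zero normal, so $LR^{(b)}_{i,j}$ is the square of a standard normal, i.e.\ $\chi_1^2$.

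Finally, for the p-value claim I would invoke the probability integral transform: if $LR^{(b)}_{i,j} \sim \chi_1^2$ under the null, then $\chi_1^2(LR^{(b)}_{i,j})$ is uniform on $[0,1]$, whence $p^{(b)}_{i,j} = 1 - \chi_1^2(LR^{(b)}_{i,j})$ is also uniform and thus a valid p-value. I do not anticipate a genuine obstacle; the only step requiring care is the algebraic verification that the prefactor coincides with the inverse variance of the difference, which is exactly what exposes the $\chi_1^2$ structure. The central limit theorem referenced in the statement enters only to justify that the $\%\Delta_n$ are approximately normal in the first place; once the normal model is granted with known variances, the conclusion is exact.
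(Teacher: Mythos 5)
Your proposal is correct, and it is precisely the argument the paper alludes to: the paper offers no written proof, asserting only that the lemma is ``a straightforward consequence of the model \ldots along with central limit theorem,'' and your reduction of $LR^{(b)}_{i,j}$ to $\frac{S_{C^{(b)}_i} S_{C^{(b)}_j}}{S_{C^{(b)}_i}+S_{C^{(b)}_j}}\bigl(\hat{\mu}_{C^{(b)}_i}-\hat{\mu}_{C^{(b)}_j}\bigr)^2$, together with the inverse-variance identity, independence of the disjoint clusters, and the probability integral transform, is exactly the intended straightforward consequence. You also correctly isolate the role of the CLT (justifying the normal model for the $\%\Delta_n$, after which the $\chi_1^2$ distribution is exact), matching the paper's framing.
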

These derivations are used in section \ref{sec:LikelihoodRatioComputation} to design an efficient algorithm for the use case of this section.

\subsection{Offline Evaluation of Fairness of Machine Learning Models}
Notions of fairness, which including equalized odds, equality of opportunity, individual or group fairness, and counterfactual fairness, are continually growing.  For brevity, we will discuss several metrics of fairness in the context of binary classification. Suppose that for each member $m$, covariates $X$ are used to predict a binary outcome $y$ through a model score, $s(X)$.  Suppose that a member belongs to one of $K$ (disjoint) groups $\mathcal{G}_1,...,\mathcal{G}_K$.  Suppose the the model classification of the observation, denoted by $c(X)$, is one if $s(X)$ exceeds a threshold $\tau$, and zero otherwise.  The classifier is said to achieved Equalized Odds if
\[P\left[c(X) = 1 | y, m \in \mathcal{G}_k \right] = P\left[c(X) = 1 |y, m \in \mathcal{G}_{k'} \right]\]
for all $k$, and $k'$.  The classifier satisfies equalized odds if 
\[P\left[c(X) = c |  m \in \mathcal{G}_k \right] = E\left[c(X) = c |m \in \mathcal{G}_{k'} \right]\]
for all $k$, and $k'$ and $c \in \{-1,1 \}$.
One can also assess parity with respect to any conventional metric assessing quality of a classifier (such as area under the receiver operator curve (AUC), True or False Positive Rate (TPR/FPR), Misclassification Rate, etc.) by requiring that the metric is equal across groups.  The vast majority of these metrics can be expressed as an appropriate sum, and inference resulting in a p-value is readily performed through a simple application of the central limit theorem. In particular, derivations of the previous section directly apply here.

\section{Efficient Computation}\label{sec:LikelihoodRatioComputation}
In this section we provide efficient implementation of the algorithm for the likelihood-ratio framework of previous sections. Algorithm \ref{algo:loglike} is designed based on derivations of section \ref{sec:notion-loglike} and is a recursive implementation of algorithm \ref{algo:clustering}. The computational complexity is $O(K^2)$.

\begin{algorithm}[!h]
	\caption{Detecting Heterogeneous Treatment Effects via Recursive Agglomerative Clustering}\label{algo:loglike}
	\begin{algorithmic}[1]
		\State \text{Input: } Groups $C_1,...,C_K$, metrics $(\% \Delta_1, SD_1),...,(\% \Delta_K, SD_K)$, and level of significance $\alpha$
		\State Output : A set of dissimilar clusters; in case of non-rejection only one cluster will be returned
		\State \text{Prepare data: } Clusters $\mathcal{C}_{\text{init}} = \{\{C_i\}\mid i = 1,\ldots, K\}$, summary metrics $\mathcal{D}_{\text{init}} = \{\Tilde{\Delta}_{C_i}\mid i = 1,\ldots, K \}$, $ \mathcal{S}_{\text{init}}=  \{S_{C_i} \mid i = 1,\ldots, K\}$ as defined in \ref{eqn:summary-stats-precision} and \ref{eqn:summary-stats-weighted-mean}, $\mathcal{P}_{\text{init}} = \{p_{i,j} \mid 1\leq i \le j \leq K\}$ where $p_{i,j}$ is defined through \ref{eqn:basicLRT}, level of significance $\alpha$, and the initial number of groups $K$
		\State \Return{\Call{TreatmentEffectsClustering}{$\mathcal{C}_{\text{init}}$, $\mathcal{D}_{\text{init}}$, $\mathcal{S}_{\text{init}}$, $\mathcal{P}_{\text{init}}$, $\alpha$, $K$ }}\\
		\Function{TreatmentEffectsClustering}{$\mathcal{C}$,\ $\mathcal{D}$,\ $\mathcal{S}$,\ $\mathcal{P}$, $\alpha$, $K$}
		    \State Set $p^\ast  = \max p_{i,j}\in \mathcal{P}$ and $\{i^\ast, j^\ast\} = \argmax p_{i,j}\in \mathcal{P}$.
		    \If{$p^\ast < \alpha/K^2$}
		        \State \Return $\mathcal{C}$
		    \Else
		        \State Define $C_{\text{new}} = C_{i^\ast} \cup C_{j^\ast}$ (the merged cluster), $ \Tilde{\Delta}_{C_{\text{new}}} =  \Tilde{\Delta}_{C_{i^\ast}} +  \Tilde{\Delta}_{C_{j^\ast}}$, and $S_{C_{\text{new}}} = S_{{C_{i^\ast}}} + S_{C_{j^\ast}}$, where  $S_{C}$ and $\Tilde{\Delta}_{C}$ are as defined in \ref{eqn:summary-stats-precision} and \ref{eqn:summary-stats-weighted-mean}
		        \State Define
		        \begin{align*}
		            \mathcal{C}_{\text{updated}} &= \text{concatenate}\left(\mathcal{C} / \{ C_{i^\ast} , C_{j^\ast}\}, C_{\text{new}}\right) \\
		            \mathcal{D}_{\text{updated}} &= \text{concatenate}\left(\mathcal{D} / \{ \Tilde{\Delta}_{C_{i^\ast}} , \Tilde{\Delta}_{C_{j^\ast}}\}, \Tilde{\Delta}_{C_{\text{new}}}\right)\\
		            \mathcal{S}_{\text{updated}} &= \text{concatenate}\left(\mathcal{S} / \{ S_{{C_{i^\ast}}} , S_{C_{j^\ast}}\}, S_{C_{\text{new}}}\right) 
		        \end{align*}
		        and $ \mathcal{P}_{\text{updated}}$ by removing dissimilarity scores of the pairs involving $\{i^\ast, j^\ast\}$, and adding those for $C_{\text{new}}$
		        \State \Return{\Call{TreatmentEffectsClustering}{$\mathcal{C}_{\text{updated}}$,\ $\mathcal{D}_{\text{updated}}$,\ $\mathcal{S}_{\text{updated}}$,\ $\mathcal{P}_{\text{updated}}$, $\alpha$, $K$}}
		    \EndIf
		\EndFunction
	\end{algorithmic}
\end{algorithm}

For a heuristic comparison of run-time complexity with the Causal Tree algorithm (which is effectively a CART algorithm), consider clustering based on a single categorical feature with $K$ possible outcomes.  Splitting the data based on this single feature requires $2^K$ comparisons, which is dramatically more computationally intensive than what is required by Algorithm \ref{algo:loglike}.

\section{Simulation Results}\label{sec:simulation}

In this section, we provide results based on simulated datasets which demonstrate the efficacy of our methodology. 

\subsection{Comparison with Causal Tree on a synthetic example}
Causal Tree, and other tree based methods, have proved to be very useful in
identifying heterogeneity of treatment effects, especially with respect to continuous covariates. However, they can have suboptimal performance on categorical features, particularly those with many levels, which are, of course, is of paramount importance in fairness context where the vast majority of covariates (e.g. gender, race, or country) are of this type.  On the contrary, our algorithm is naturally designed to work with categorical covariates. In order to understand the benefits of our methodology in these contexts, consider the following example. 

We consider countries in Asia and Africa.  From each country we randomly sample 100 people to be exposed to a control treatment and another 100 people to be exposed to a novel treatment. 
For each country, the outcome of the members assigned to the control are sampled as $N(0, 0.1)$.  For the countries in Asia, the outcome of the members assigned to the novel treatment are sampled as $N(- \mu, 0.1)$.  For the countries in Africa, the outcome of the members assigned to the novel treatment are sampled as $N(\mu, 0.1)$. 
In this context, there is heterogeneous treatment effects, with the continent membership as the source of heterogeneity.  

To compare the performance of Algorithm \ref{algo:loglike} with Causal Tree, we repeatedly simulate from this setting for values of $\mu$ taken between $0.0$ and $1$. We applied Algorithm \ref{algo:loglike} with $\alpha = 0.05$ and Causal Tree algorithm with the default settings \footnote{as given is the example usage at https://github.com/susanathey/causalTree}.
We use 20 equispaced points between zero and one for $\mu$. For each $\mu$, we repeatedly simulate from the above setting for 100 Monte Carlo samples, and use these results to understand how well the algorithms are performing the tasks of recovering the two heterogeneous continents as clusters of countries. Figure \ref{fig:power-curve} shows the probability of exact detection of Asia (the probability that one of the clusters of countries identified is exactly those countries in Asia) as a function of the treatment effect.

\begin{figure}[h]
  \centering
  \includegraphics[width=\linewidth]{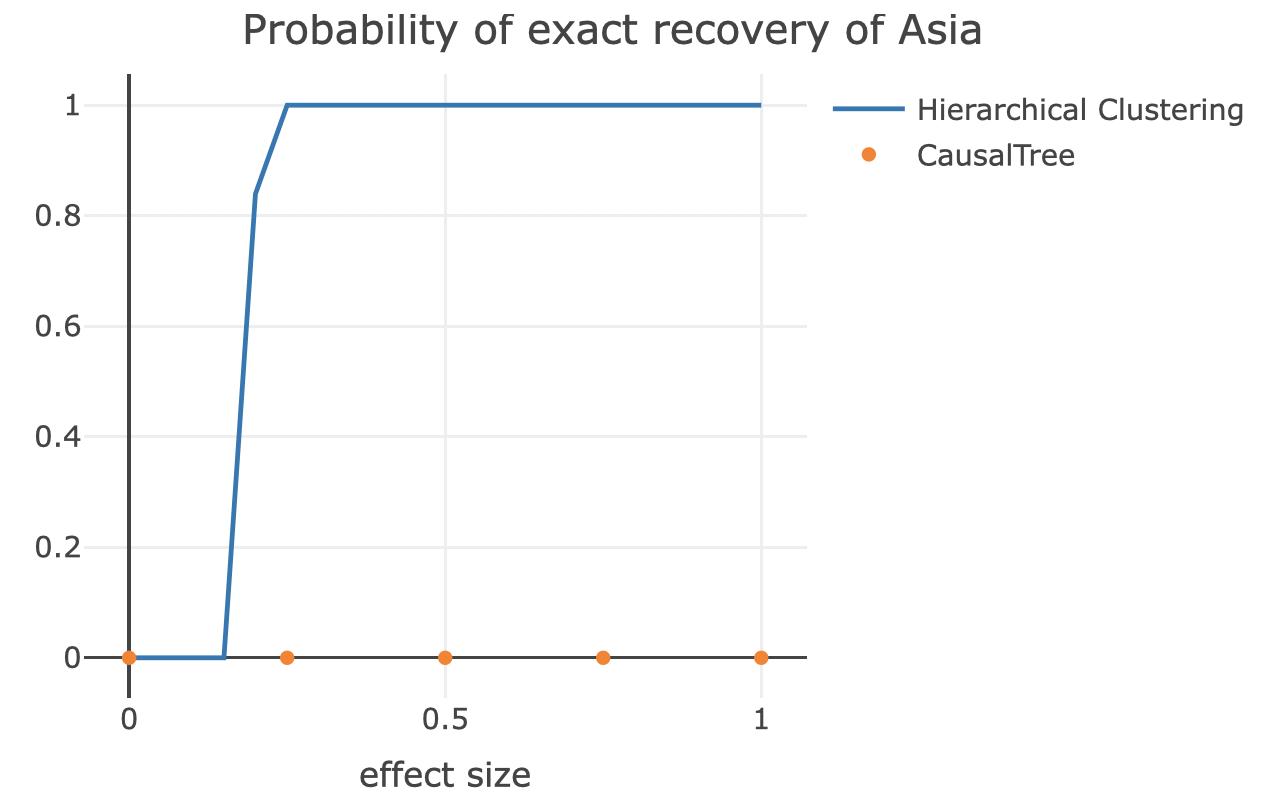}
  \caption{Power of the two algorithms for exact recovery of Asia.}\label{fig:power-curve}
\end{figure}

By $\mu = 0.2$, Algorithm \ref{algo:loglike} is reliably (with probability near one), able to recover the underlying structure of heterogeneity.  On the other hands, Causal Tree fails to accurately detect the cluster, even with $\mu$ up to $1$.  Furthermore, in addition to giving more accurate results than Causal Tree, our algorithm was substantially faster to run.  Causal Tree takes 95 seconds to run on this example, running R 3.4.1 on a MacBook Pro with a 3.5 GHz Dual-Core Intel Core i7 processor and 16GB of RAM. 
The likelihood-ratio hierarchical clustering algorithm solves the same problem in 27 milliseconds (on the same machine), offering more than three orders of magnitude speed-up. In fact, the computational cost of Causal Tree prohibits larger scale simulations, which is why we limited the study to countries in Asia and African rather than all countries.

While \cite{athey16} suggests that causal tree can be used to identify heterogeneous subpopulations (and it has been used to this effect e.g. for personalized health care by \cite{Wang16}), the primary goal of causal tree is not to identify clusters accurately, but rather to accurately predict treatment effects.  We now compare the accuracy of treatment effect predictions using our method with those given by causal tree.

When the treatment effect is zero, there is no heterogeneity and both methods effectively use all available data to predict the mean treatment effect, resulting in very accurate predictions with the hierarchical clustering method performing somewhat better. However, this changes as the treatment effect increases. For small values of the treatment effect, the hierarchical clustering algorithm performs worse than the causal tree. On the other hand, when the treatment effect is large enough that the heterogeneity is detectable, the causal tree methodology is unable to accurately capture the structure of the heterogeneity, resulting in poor treatment effect estimates.  By contrast, our method more accurately identifies the patterns in the data, allowing more data to be leveraged for treatment effect estimates, leading to much better accuracy. In terms of relative mean square error efficiency, CausalTree is at most 15 times more efficient for smaller values of $\mu$ but can be 80 times less efficient for other values.

\begin{figure}[h]
  \centering
  \includegraphics[width=\linewidth]{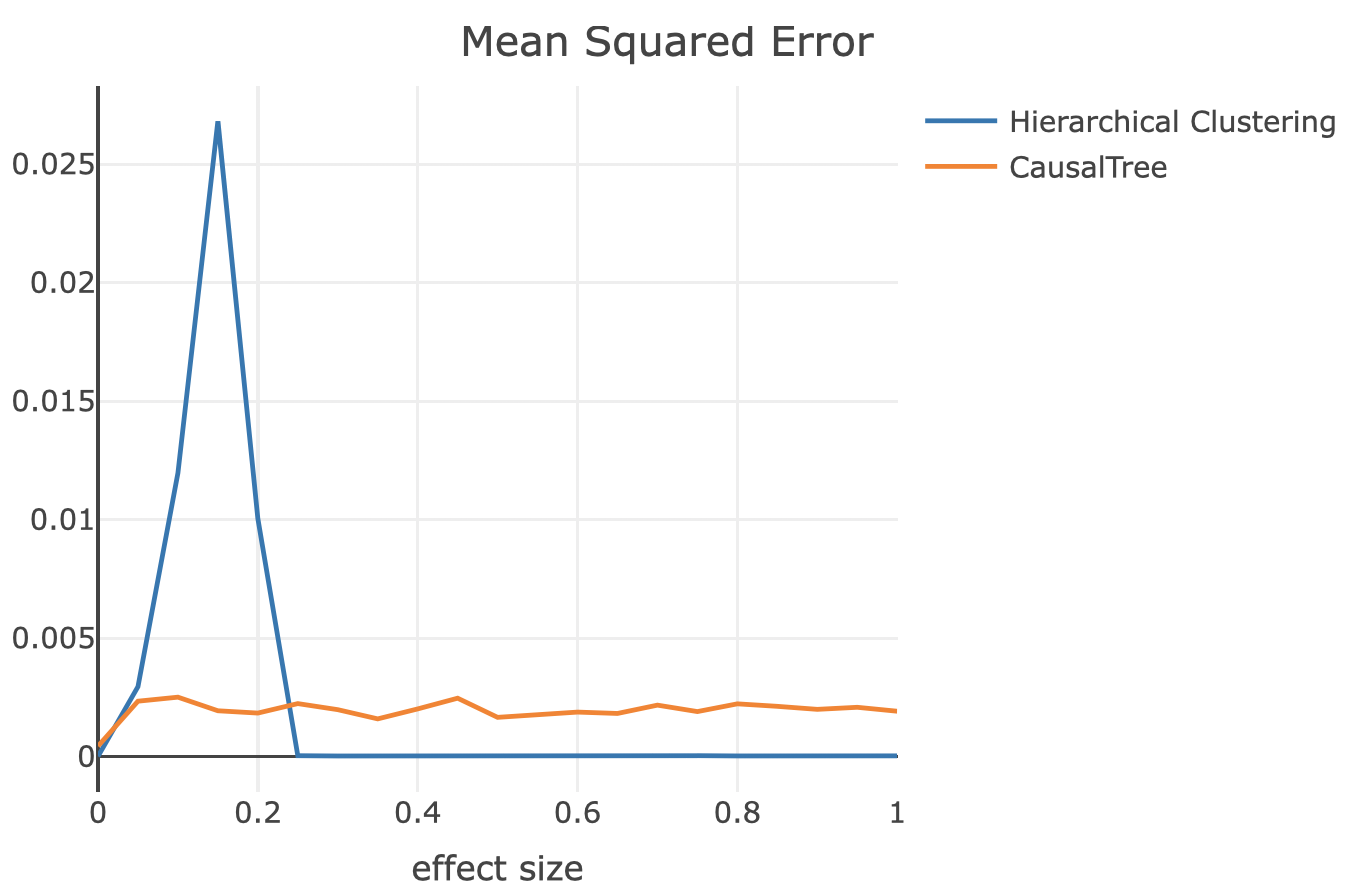}
  \caption{Mean squared error of the two algorithm for estimation of the treatment effect in a specific country.}\label{fig:mse}
\end{figure}

\section{Experimental Results}
\subsection{False positive rate under the null}
We study the performance of our algorithm, using the Bonferroni type correction, on real data from LinkedIn's experimentation platform. We consider the problem of detecting heterogeneity of treatment effect across different countries.  First, focus on the null case. We looked at 2218 samples from dummy experiments running during January 2020. These were experiments which served the same variant, which could vary from one experiment to another, to both the control and treatment group. Consequently, this provides the null setting for for problem where the treatment effect zero in all countries. In this specific setting there are 21 subgroups that are being clustered and the figure \ref{fig:level-curve} shows the false rejection rate for different values of $\alpha$ under the null hypothesis. The test in slightly anti-conservative for small values of $\alpha$ and somewhat conservative for larger values $\alpha$ as expected from a Bonferroni type correction.
\begin{figure}[H]
  \centering
  \includegraphics[width=\linewidth]{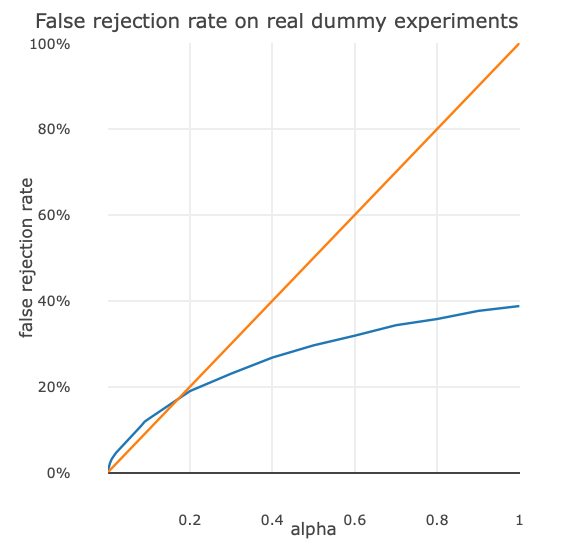}
  \caption{False positive error rate of the algorithm as a function of $\alpha$.}\label{fig:level-curve}
\end{figure}

In practice, one would have to choose a value of $\alpha$. Choice of $\alpha$ here should be done in a similar manner to standard A/B tests. $\alpha$ represents the level of error that the practitioner is willing to tolerate in exchange for a certain precision in a specific application. While the most common choice is $\alpha = 0.05$, this is subjective and should be left to the practitioner who can rely on their domain expertise and weight the trade-offs inherent to their specific application.

\subsection{Comparison with a clustering algorithm based on Gaussian mixture models}\label{sec:experiment}
Gaussian mixture models \cite{reynolds2009gaussian} (GMM) are one of the most widely used clustering algorithms across variety of applications. GMMs can be utilised as a method to detect heterogeneous treatment effects. In fact, a Gaussian mixture model is implemented in Microsoft's experimentation system as a tool for detection of heterogeneous treatment effects \cite{alexDeng}. This algorithm assumes fixed variances for each data point given by the estimated variance from the data and only estimates the means through a customized EM algorithm. Number of clusters is an input parameter to these types of algorithms and choosing the number of clusters is usually a great challenge. The GMM used in Microsoft's experimentation system uses BIC and AIC. One of the advantages of our clustering algorithm is that the number of clusters are determined by the algorithm itself and the user only has to specify the false positive tolerance $\alpha$.

We applied this method to the dummy experiments and compared the results with our algorithm. The GMM rejected the null hypothesis 7 times out or 2218 samples. This is similar to $\alpha = 0.001$ in our algorithm. Out of those seven, four of them were also detected by our algorithm for $\alpha < 0.001$. Figure \ref{fig:dummy-comparison} shows the resulting clusterings from the two algorithms in one of these cases; the two methods gave similar results for three out of four common cases and we only present the case where they differed significantly. As can be seen, the clusters generated by our algorithm are more intuitive as the means for data points are clearly separated by a single cutoff, whereas the clusters generated by the GMM are somewhat arbitrary. This is likely because the GMM is indifferent to class assignment of point with higher variance as then can be assigned to different components of the mixture models.
\begin{figure}
  \centering
  \includegraphics[width=\linewidth]{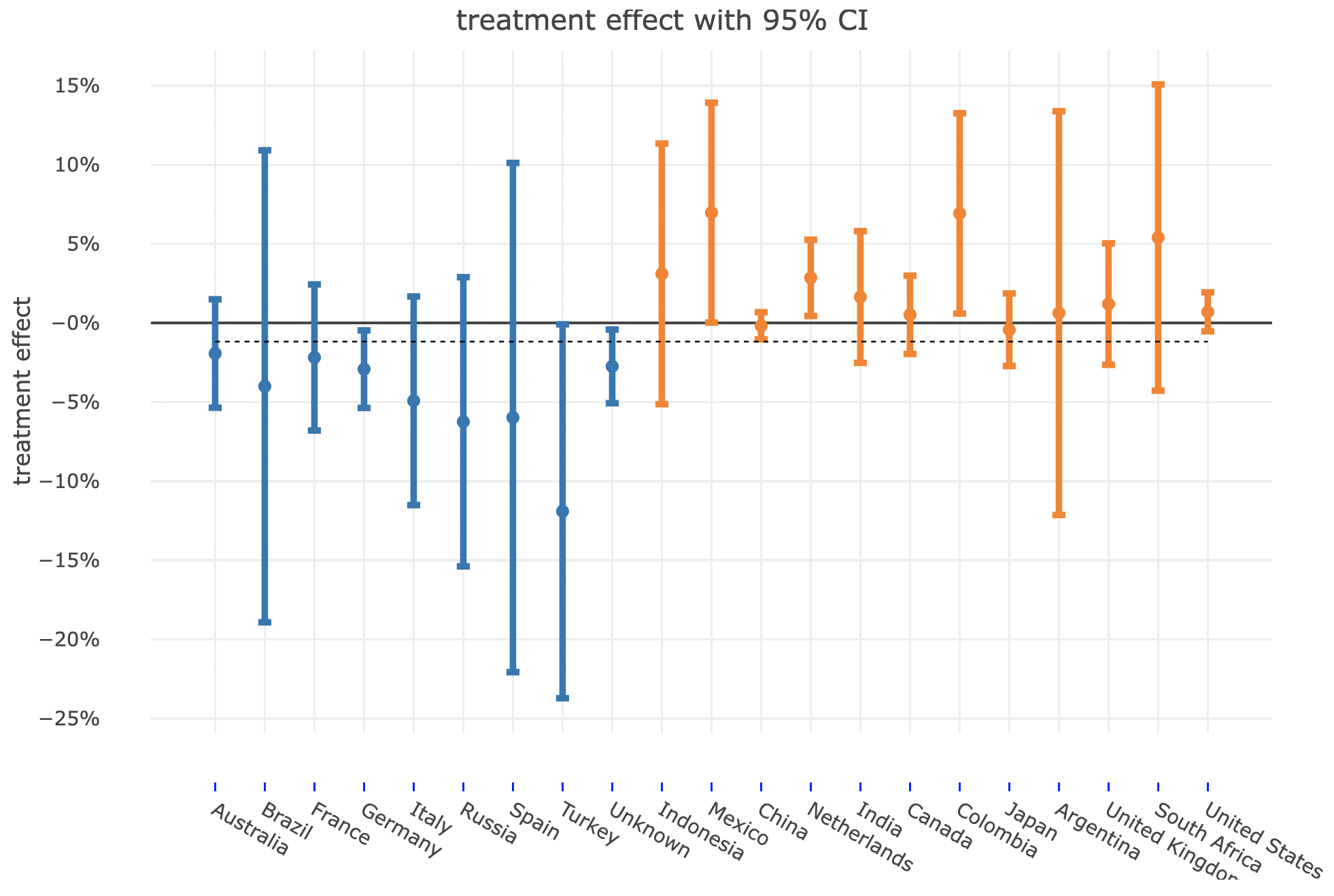}\\
  \includegraphics[width=\linewidth]{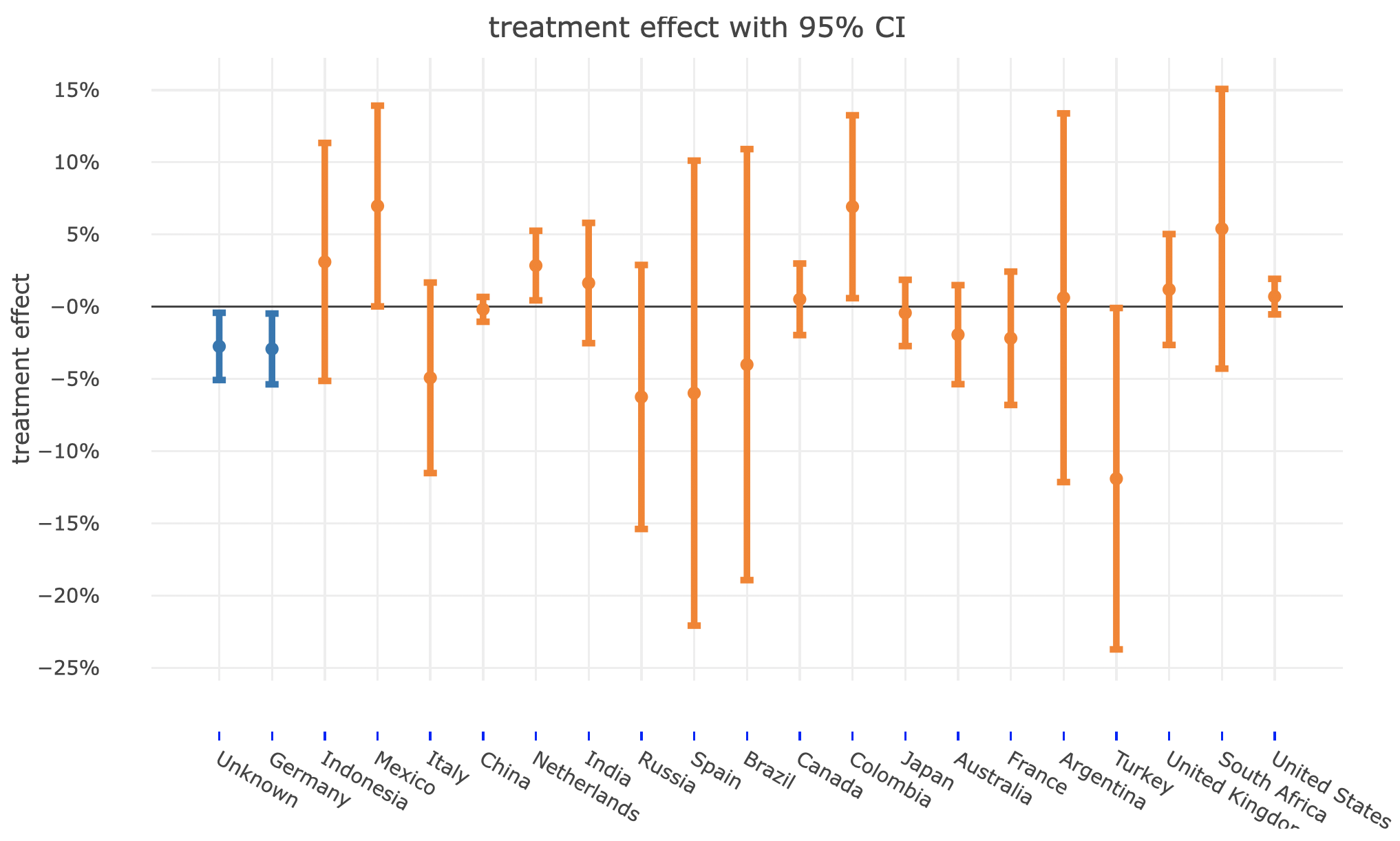}
  \caption{Clusters generated by: likelihood-ratio hierarchical clustering (top) and Gaussian mixtures models (bottom). Each color corresponds to a cluster.}\label{fig:dummy-comparison}
\end{figure}

\subsection{Example of an experiment: new user on-boarding}
We look at an experiment that demonstrated heterogeneity in treatment effects. The experiment A/B test two different on-boarding experiences for new users. The metric we consider is the percentage of user who had a session after opening an email we sent them. This measures how likely new users are to interact with the on-boarding process. Figure \ref{fig:real-comparison} illustrates the clusters generated by the two algorithms discussed in the previous section. Again, the clusters generated by our algorithm are more interpretable as they are given by two cutoffs for the treatment effects, whereas, the clusters from the GMM include overlapping intervals. 

\begin{figure}[h]
  \centering
  \includegraphics[width=\linewidth]{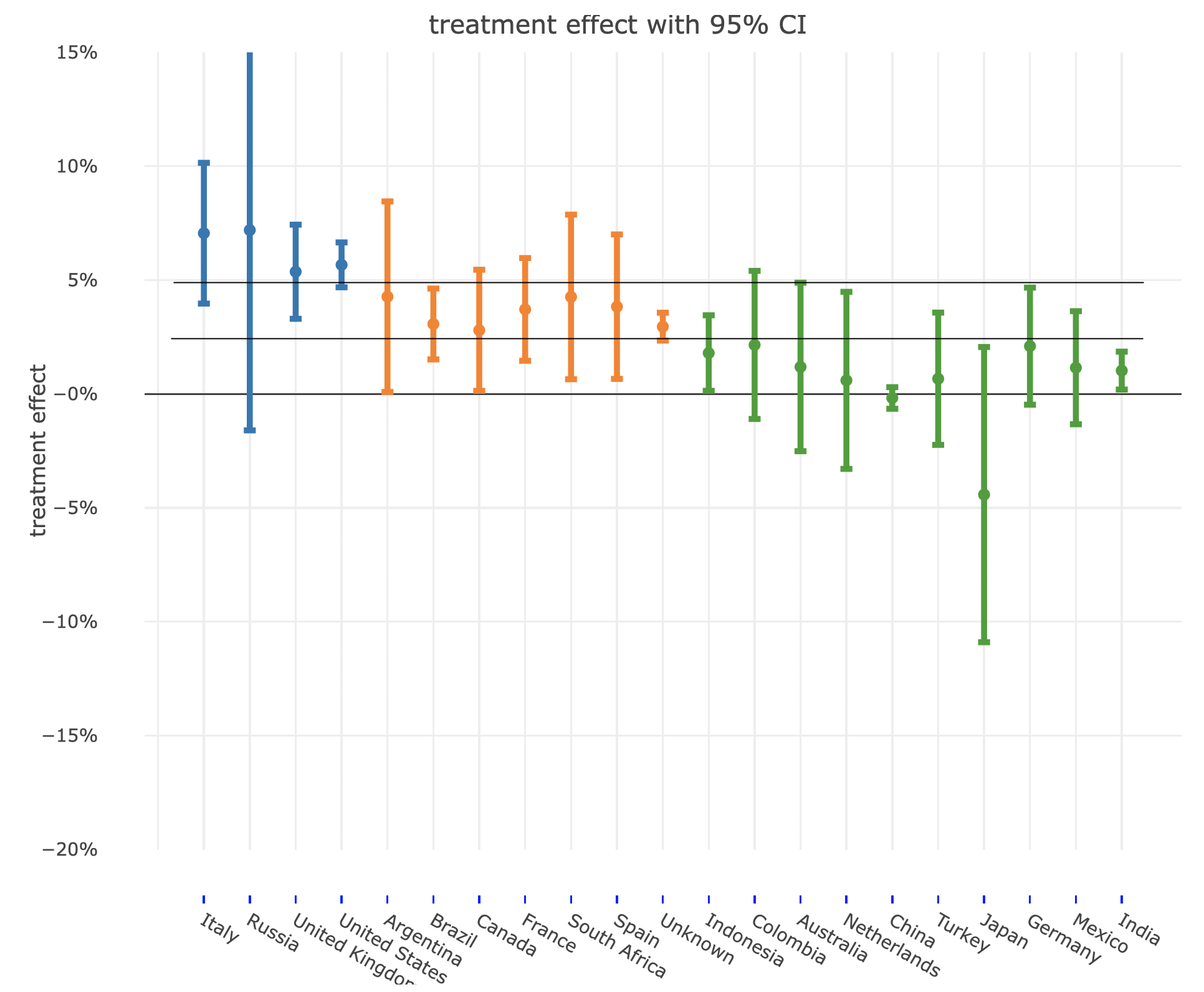}\\
  \includegraphics[width=\linewidth]{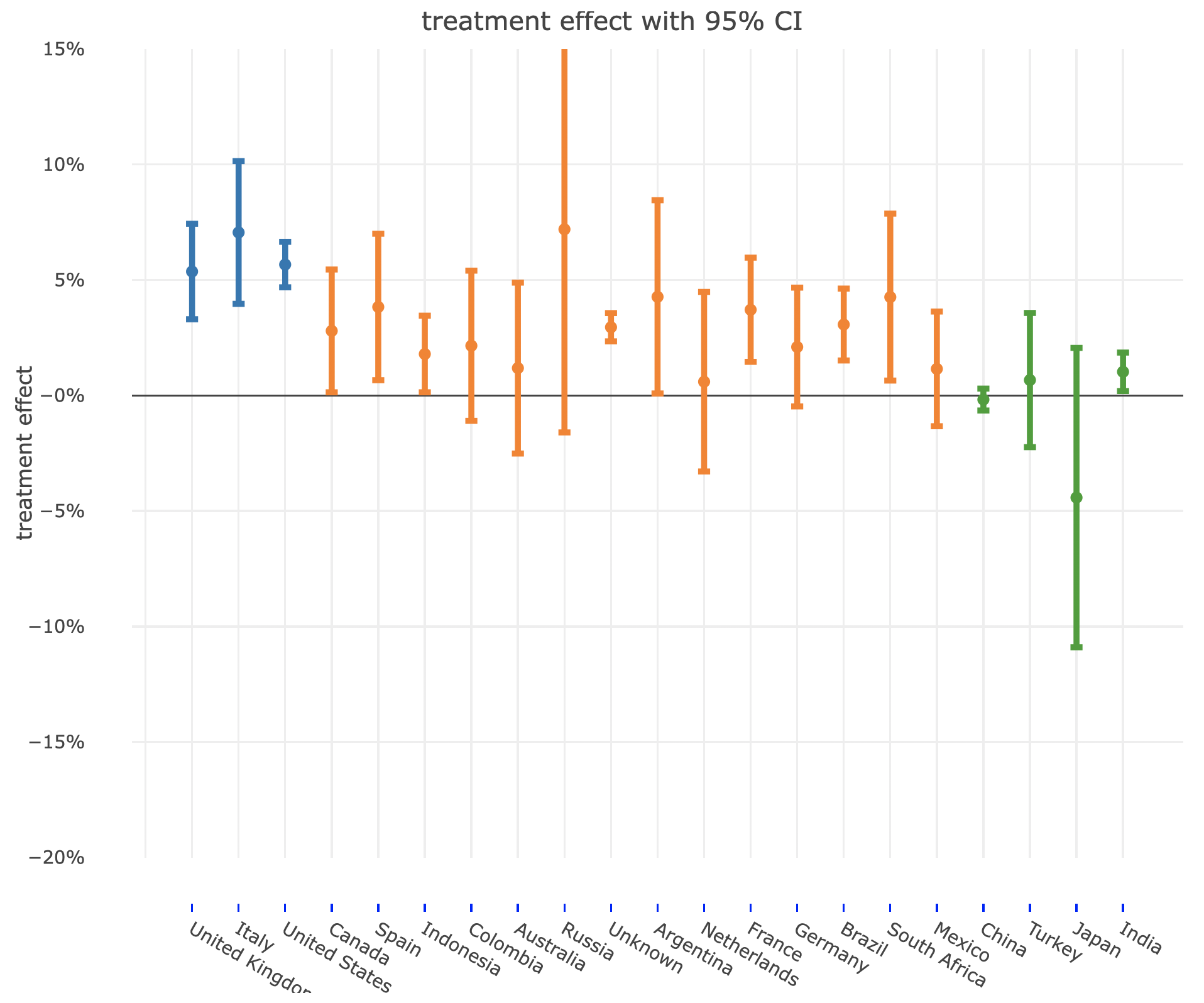}
  \caption{Clusters generated by: likelihood-ratio hierarchical clustering (top) and Gaussian mixtures models (bottom). Each color corresponds to a cluster.}\label{fig:real-comparison}
\end{figure}

\section{Conclusion}

We provide a hierarchical clustering algorithm which relies on a notion of group similarity which can be established through a rigorous statistical test.  We develop a novel choice of test statistics for evaluating equality of lift, and recommend choices of well known fairness metrics.    
This method is applicable to a holistic approach towards model or product evaluation, in which fairness can be evaluated offline, and business impact can be evaluated online.  In these contexts, our algorithm provides interpretable groupings of members for whom the product demonstrates comparable performance.

\section{Appendix}

\subsection{Reproducibility}
A link to a git repository containing the relevant code for the simulations will be included in this section.  This is currently omitted to maintain the double blind review.

\subsection{Proofs}
\begin{proof}[Proof of Theorem \ref{thm:validity}]
Note that if the hypothesis
\begin{equation}\label{eqn:GlobalNull}
H_0: C_i \text{ is similar to } C_j \text{ for all } i \neq j, 1 \leq i,j \leq K.     
\end{equation}
is true, then we also have that the null hypothesis 
\[
H_0^{(b)}: \text{ There exists a pair of clusters } C_i \text{ and } C_j \text{ such that }  C_i \text{ is similar to } C_j
\]
is also true because of the merge invariance under $H_0$.  

Now, if $H^{(b)}_0$ is true, then there exist some indices $i^*$ and $j^*$ such that 
\[
H_{i^*,j^*}^{(b)}: C^{(b)}_i \text{ is similar to } C^{(b)}_j
\]
is true.  Consequently, 
\begin{align*}
    P(\text{Reject } H^{(b)}_0) &= P(\text{Reject } H_{i,j}^{(b)} \text{ for all } i \text{ and } j )\\
    &\leq P(\text{Reject } H_{i^*,j^*}^{(b)})\\
    & \leq \alpha/K~.
\end{align*}
Therefore, the result follows from the Bonferroni correction and the fact that we are sequentially testing at most $K$ such hypotheses $H_0^{(b)}$.
\end{proof}

\subsection{On the merge invariance criteria}

The merge invariance criteria is needed to maintain that merging clusters preserves similarity when the original clusters are similar.  This is readily verified in many contexts.

Consider the case of lift, and suppose three clusters have means $\mu_i$, $i=1,2,3$ under a control model, and $l \cdot \mu_i$ under a treatment so that the lift for each cluster is $l$.  In this case, if the proportion of members in each group is $p_i$, then the lift of the cluster formed by merging the first two clusters is 
\[
\frac{p_1 l \cdot \mu_i + p_2 l \cdot \mu_2}{p_1 \mu_i + p_2 \mu_2} = l~.
\]
Therefore, the merged cluster is similar to the third cluster.  

To provide an example in the fairness context, suppose we say that two clusters are similar if their false positive rate differs by no more than a specified tolerance $\epsilon$ (e.g. the FPR is within 1\%).  Then because the false positive rate of a cluster formed by merging two similar clusters is a weighted average which must lie between the false positive rates of the two clusters, it is easily seen that the merge invariance property holds. 

To demonstrate a scenario where this may fail, consider defining clusters as similar if they have equal variance.  If three clusters have equal variance, merging the first two may result in a group with different variance than the original common variance.  For this reason, the cluster formed by merging the first two clusters will no longer be similar to the third cluster.

%%
%% The acknowledgments section is defined using the "acks" environment
%% (and NOT an unnumbered section). This ensures the proper
%% identification of the section in the article metadata, and the
%% consistent spelling of the heading.
\begin{acks}
The authors would like to thank Alex Deng for sharing his work on heterogeneous treatment effects at Microsoft which was used for comparison in our simulations. We would also like to thank Nanyu Chen, Weitao Duan, Zhoutong Fu, Justin Marsh, and Praveen Gujar for helpful discussions.
\end{acks}

%%
%% The next two lines define the bibliography style to be used, and
%% the bibliography file.
\bibliographystyle{ACM-Reference-Format}
\bibliography{sample-base}

%%
%% If your work has an appendix, this is the place to put it.
\appendix

\end{document}